\newif\ifdraft
\newcommandx{\nt}[2][1=]{\todo[linecolor=red,
			backgroundcolor=red!10,bordercolor=red,#1]{#2}}
\newcommandx{\jy}[2][1=]{\todo[linecolor=green,
            backgroundcolor=green!10,bordercolor=green,#1]{JY: #2}}
\newcommandx{\sh}[2][1=]{\todo[linecolor=blue,
			backgroundcolor=blue!10,bordercolor=blue,#1]{SH: #2}}
\newcommandx{\wt}[2][1=]{\todo[linecolor=orange,
			backgroundcolor=orange!10,bordercolor=orange,#1]{SF: #2}}
\newcommand{\nt}[1]{{}}
\newcommand{\sh}[1]{{}}
\newcommand{\wt}[1]{{}}
\newcommand{\jy}[1]{{}}
\newif\iftwocolumn
\newtheorem{problem}{Problem}
\newtheorem{proposition}{Proposition}[section]
\newtheorem{theorem}{Theorem}[section]
\theoremstyle{definition}
\theoremstyle{remark}
\def\subsubsection{\@startsection{subsubsection}
                                 {3}
                                 {\z@ \hspace*{1mm}}
                                 {0ex plus 0.1ex minus 0.1ex}
                                 {0ex}
                                 {\normalfont\normalsize\itshape}}
\newcommand{\W}{\mathcal{W}}
\newcommand{\R}{\mathcal{R}}
\let\O\relax
\newcommand{\O}{\mathcal{O}}
\def\crp{{\tt {CRP}}\xspace}
\def\mrcr{{\tt {MRCR}}\xspace}
\def\tamp{{\tt {TaMP}}\xspace}
\def\namo{{\tt {NAMO}}\xspace}
\def\mpsat{{\tt {MPSAT}}\xspace}
\title{\bf
Computing High-Quality Clutter Removal Solutions for Multiple Robots
}
\author{
Wei N. Tang \quad Shuai D. Han \quad Jingjin Yu
\thanks{
W. N. Tang, S. D. Han, and J. Yu are with the Department of 
Computer Science, Rutgers, the State University of New Jersey, 
Piscataway, NJ, USA. E-Mails: 
\{{\tt wei.tang,shuai.han,jingjin.yu}\}\hspace*{.25em}
\MVAt \hspace*{.25em}rutgers.edu. 
This work is supported by NSF awards IIS-1734419 and IIS-1845888.
}%
}
\begin{document}

\maketitle
\thispagestyle{empty}
\pagestyle{empty}


\begin{abstract}
We investigate the task and motion planning problem of clearing 
clutter from a workspace with limited ingress/egress access for 
multiple robots. We call the problem multi-robot clutter removal 
(\mrcr). Targeting practical applications where motion planning 
is non-trivial but is {\em not} a bottleneck, we focus on finding
high-quality solutions for feasible \mrcr instances, which depends 
on the ability to efficiently compute high-quality object removal 
sequences. Despite the challenging multi-robot setting, our 
proposed search algorithms based on A${}^*$, dynamic programming, 
and best-first heuristics all produce solutions for tens of objects 
that significantly outperform single robot solutions. 
Realistic simulations with multiple Kuka youBots further confirms 
the effectiveness of our algorithmic solutions. 
In contrast, we also show that deciding the optimal object removal 
sequence for \mrcr is computationally intractable. 
\end{abstract}

\section{Introduction}\label{sec:intro}
%
This study expands the investigation of the {\em clutter removal problem} 
(\crp) \cite{TanYu2019ISRR} to the case where multiple robots are 
available. Specifically, we target the setting where several robots 
operate in a constrained workspace where an exit is shared, 
and the task is to remove objects that are initially scattered in the workspace. 
We call this the {\em multi-robot clutter removal} (\mrcr) problem 
(see Fig.~\ref{fig:intro} for an example). 
The shift from a single robot to multiple robots brings two key
challenges. 
First, the robots must share the free space, especially when they are 
close to the exit, which negatively impacts the computational efficiency 
when high-quality plans are sought after. 
Second, intricate interactions arise when there are more than one robot. 
Consider the scenario in which one robot is scheduled to grasp an object 
(say $o_1$), while $o_1$ is currently blocking the access to another object $o_2$. As we plan 
for a second robot, an optimal plan must account for the possibility that 
$o_2$ becomes available after the first robot picks up $o_1$, even 
though $o_2$ is not accessible at the moment.  
%

\begin{figure}[ht]
    \centering
    \includegraphics[width = 0.99\columnwidth]{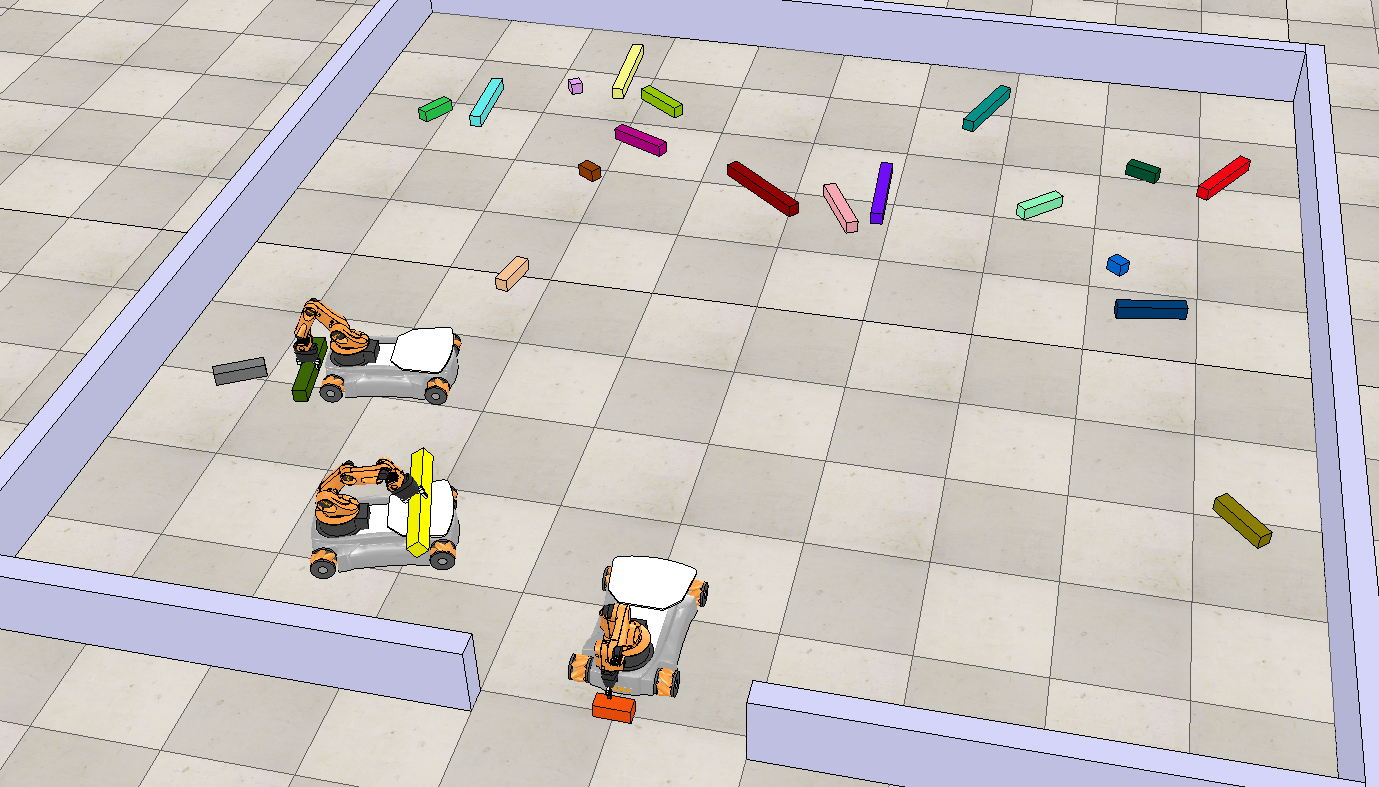}
    \caption{\label{fig:intro}
    A snapshot from a simulation run with three Kuka youBots solving 
        a clutter removal task. At the moment, the robot at the bottom is 
        placing an orange object at the drop-off location (i.e., the exit), 
				the robot in the middle is carrying a yellow object to the exit, 
        while the robot at the top is retrieving a dark green object. 
    To solve the problem efficiently, careful coordination among the 
		robots is necessary. 
    }
\end{figure}

To address these challenges, in this paper, multiple best-first and 
near-optimal algorithmic solutions are developed for the computation 
of high-quality object removal sequences for multiple robots. These 
solutions are empirically shown to be of high quality when they are 
compared with linearly scaled single-robot solutions. For example, 
for $15$ objects, solutions computed using dynamic programming use 
only less than $60\%$ the execution time required for a single robot.
This closely match the theoretical lower bound of $50\%$, which is 
clearly impossible to achieve due to inevitable robot-robot 
interaction. In contrast to the single-robot case \cite{TanYu2019ISRR}, 
for typical multi-robot settings, though greedy best-first methods work reasonably 
well and run faster, they fall significantly behind more optimal 
methods in terms of solution quality. V-REP~\cite{RohSin2013IROS} based 
simulation further confirms the removal sequences computed by our proposed 
methods remain effective when they are integrated into complete solutions. 

From the practical point of view, our study is motivated by the need
of deploying mobile robot systems for disaster response tasks
\cite{pratt2013darpa,murphy2014disaster}. Clearly, effective and 
autonomous disaster response requires the close integration of 
robotics hardware and advanced algorithmic solutions spanning computer 
vision (e.g., scene understanding), planning, among others. 
Our work focus at calculating the optimal object picking sequence, 
which is often called {\em task planning} in a {\em task and motion planning} (\tamp) 
problem \cite{cambon2009hybrid,plaku2010sampling,srivastava2014combined,
garrett2015ffrob,dantam2016incremental}. 

Generally, solving a \tamp challenge requires discrete combinatorial 
reasoning and (continuous) motion planning, both of which can often 
be computationally hard \cite{wilfong1991motion,chen1991practical,
demaine2000pushpush,nieuwenhuisen2008effective,hopcroft1984complexity,
kavraki1993complexity}. Nevertheless, effective algorithmic solutions 
have been proposed for solving many practical settings. A problem 
bearing similar combinatorial challenge as \mrcr is the problem of 
{\em Navigation among Movable Obstacles} (\namo). When the problem 
instance has {\em monotone} property, i.e., a solution exists that 
requires moving each obstacle once, backtracking techniques may be 
applied to effectively solve it \cite{stilman2008planning}. 
Probabilistic complete solutions for non-monotone settings have also 
been proposed \cite{van2009path}. In solving the single robot \crp,
we further note that {\em dynamic programming} can be integrated 
into the backtracking process \cite{TanYu2019ISRR} to reduce the 
search complexity significantly, from $O(n!)$ to $O(n2^n)$. 

Object rearrangement, e.g., \cite{ota2009rearrangement,krontiris2015dealing,
HanStiKroBekYu18IJRR,HanFenYu20ICRA-RAL} is a class of \tamp problems closely 
related to \mrcr. Most of these studies focus on the effective generation of 
a rearrangement sequence to reach a desired spatial order of 
multiple objects. Some formulations \cite{ota2009rearrangement,havur2014geometric}
in this domain are much like \namo. Whereas a search based approach is used 
in \cite{ota2009rearrangement}, symbolic reasoning is applied in 
\cite{havur2014geometric}. In contrast, \cite{krontiris2015dealing,
HanStiKroBekYu18IJRR} put more emphasis on taming the combinatorial explosion 
caused by the multiple objects involved. As it turns out, the combinatorial
aspect is highly non-trivial. For example, rearranging unlabeled objects 
is already NP-hard if an optimal solution is sought after 
\cite{HanStiKroBekYu18IJRR}. 

Our study builds on efforts aimed at developing integrated \tamp 
solutions \cite{cambon2009hybrid,plaku2010sampling,srivastava2014combined,
garrett2015ffrob,dantam2016incremental}. However, the current work 
distinguish itself in that it attacks {\em optimality issues} under the 
\crp formulation but for multiple robots. In this aspect, the focus is 
similar to \cite{vega2016asymptotically,HanStiKroBekYu18IJRR,TanYu2019ISRR}.
We note that this work does not consider other equally important aspects 
in \tamp including grasp planning \cite{ciocarlie2009hand,bohg2014data}, 
high-fidelity motion planning of robot arm \cite{simeon2004manipulation,
berenson2011task,zucker2013chomp,cohen2014single}, non-prehensile 
manipulation \cite{cosgun2011push,dogar2011framework,agboh2018pushing}, 
or uncertainty rising from perception and motion 
\cite{chang2012interactive,van2014probabilistic,eitel2017learning,
mahler2017dex,moll2018randomized}.

\textbf{Main Constributions.} First, we develop several 
practical combinatorial algorithms that generate high-quality object 
removal sequences for multiple robots (the problem has a search space
of size $O(n!k^n)$ where $n$ is the number of objects and $k$ is number
of robots). These algorithms include novel A${}^*$ and dynamic programming 
variants which produce solutions approaching theoretical optimality 
limits. Second, through realistic simulations using multiple Kuka 
youBots, we verify that our algorithms remain effective under practical 
application setups, providing significant 
savings in execution time as compared with single robot solutions. 
This validates the premise that a high-quality object removal 
sequence is a main performance impacting factor in real-world clutter
removal tasks. As a minor contribution, we also show that \mrcr is 
NP-hard to optimally solve.

The rest of the manuscript is organized as follows. In 
Section~\ref{sec:problem}, we define \mrcr and provide an overview of 
our \mrcr solution pipeline. 
In Section~\ref{sec:pipeline}, we describe \mrcr's 
structural properties, including NP-hardness if one attempts to optimally 
solve a part of it, and an intricate dependency between two 
robots and two objects. In Section~\ref{sec:algorithm}, we provide 
combinatorial algorithms for deciding high-quality object removal
sequences that also match objects with robots. We present 
experimental results in Section~\ref{sec:evaluation} and conclude 
in Section~\ref{sec:conclusion}.

\section{Preliminaries}\label{sec:problem}
\subsection{The Multi-Robot Clutter Removal Problem}
We consider the setting in which $k \ge 2$ mobile robots $\R = \{r_1, \dots, r_k\}$ 
are to clear $n$ rigid objects $\O = \{o_1, \ldots, o_n\}$ {\em scattered} on the 
ground, i.e., the objects are isolated from each other. 
We denote the workspace as $\W \subset \mathbb R^2$, and its 
boundary as $\partial \W$. 
The robots are initially placed outside of $\W$, and can enter $\W$ 
through an {\em exit} on $\partial \W$. Each robot is capable of grasping and 
transporting a single object at a time. Each object may be picked up once 
and must be subsequently transported outside of $\W$. An object is considered 
{\em cleared} after it is carried by the robot outside the exit and dropped off. 
The problem studied in this paper is defined as below. 

\begin{problem}\textbf{Multi-Robot Clutter Removal (\mrcr). }
    Given $\W, \R, \O$, find a time-optimal plan to clear all objects in $\O$.
\end{problem}

A typical \mrcr instance with $k = 3$ is illustrated in Fig.~\ref{fig:intro}. 
In this study, we assume the given \mrcr instance is always feasible and focus
on optimizing the solution to minimize the task completion time, also known as
the {\em makespan}.

\subsection{Task and Motion Planning Pipeline}
As \tamp is generally computationally intractable, approximation is 
necessary. We apply a hierarchical approach where a task planner takes charge 
of the overall planning process. 
As shown in Fig.~\ref{fig:pipeline}, a general solution pipeline for \mrcr 
starts with the acquisition of environment setup including object 
poses, workspace boundaries, and the ingress/egress location. In the 
current work, we assume such information is given or can be directly 
retrieved from the simulator (V-REP) backend.  
In a real-world setting, which we plan to tackle in future work, this 
step would require perception techniques including object detection,
scene understanding, and pose estimation, among others. 
\begin{figure}[ht!]
    \includegraphics[trim = {0 0 155 75}, clip, width=0.99\columnwidth]{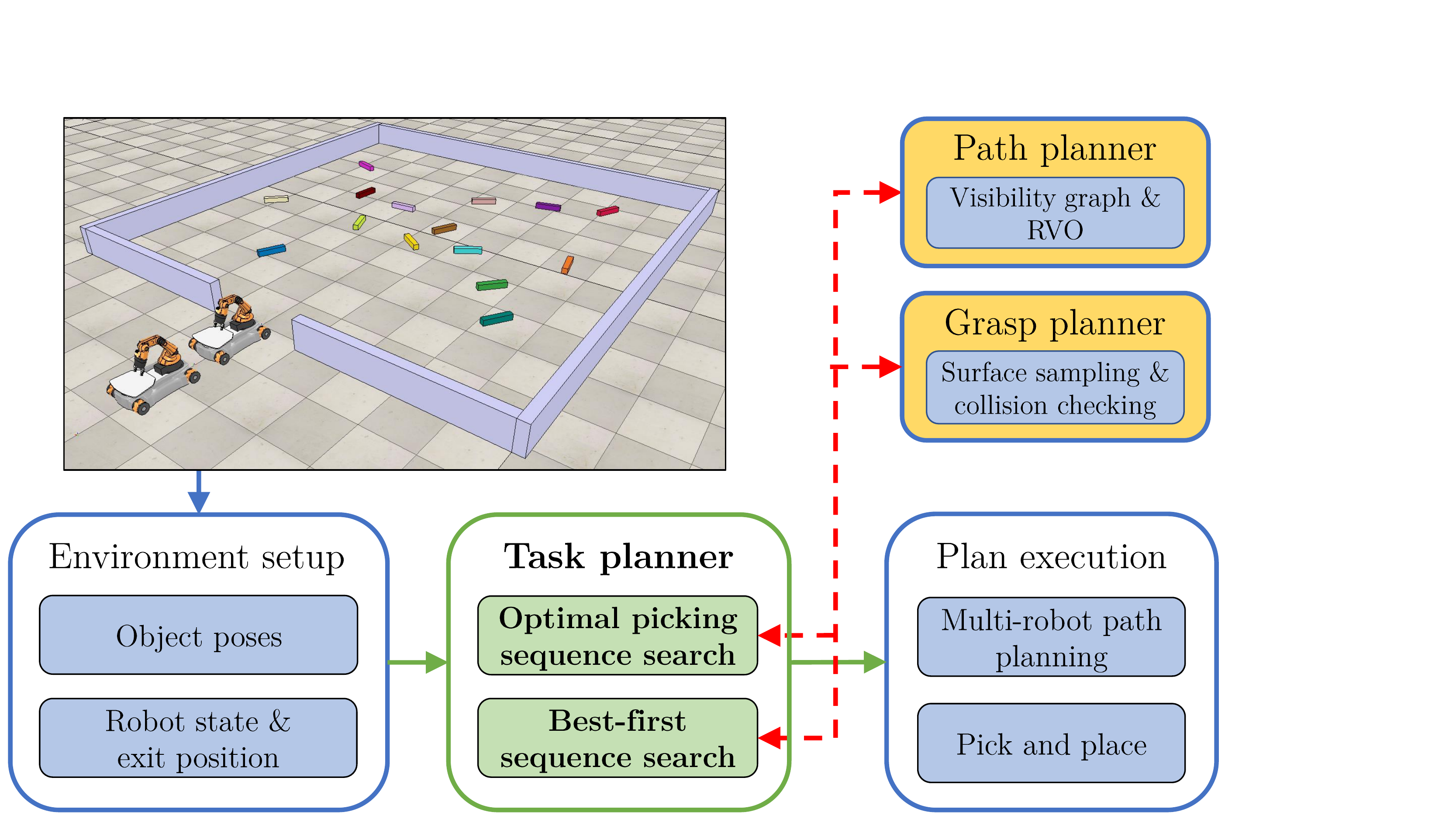}
    \caption{\label{fig:pipeline}
    The full clutter removal pipeline. 
    }
\end{figure}

With the necessary information acquired about the problem setup, the 
{\em task planner} is called next, which handles the task of matching 
each robot to an ordered sequence of objects to be removed. In general, $k \ll n$ 
so each robot will be matched with multiple objects. Multiple algorithms
are proposed (detailed in Section~\ref{sec:algorithm}) that trade
off between solution optimality and computational efficiency; a 
specific choice can be decided according to an application. During the 
task planning step, a {\em grasp planner} is used to find grasping configurations, 
and a {\em multi-robot motion planner} is used to evaluate the time 
cost for traveling to a certain target object to be removed. 
By integrating the grasp planner and the multi-robot motion planner 
into the task planner, we can compute a desirable picking 
sequence, the corresponding reference robot trajectories for reaching 
the target objects, and object grasping plan, all at the same time.

As our work does not focus on non-prehensile manipulation, we work 
with cuboid-like objects. For such objects, 
\begin{wrapfigure}[6]{r}{0.9in}
    \includegraphics[width=0.88in]{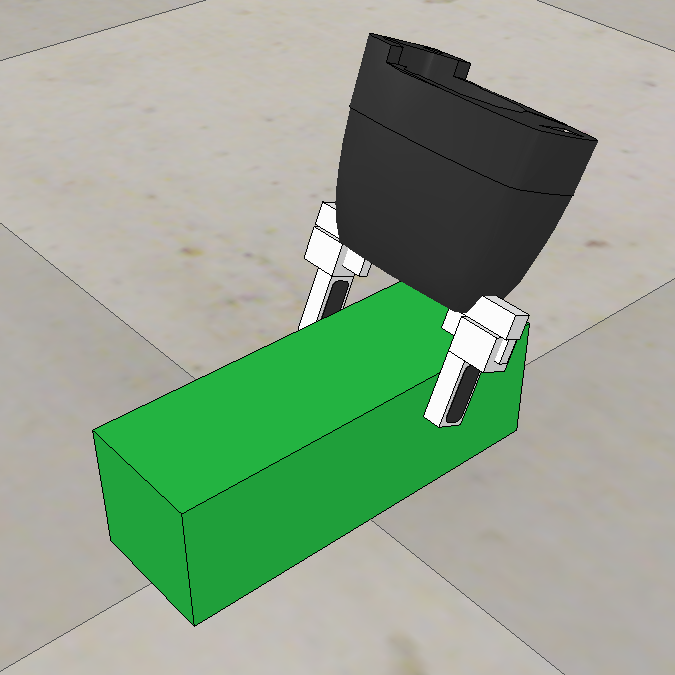}
\end{wrapfigure} 
we apply a relatively 
simple grasp planner: for each object, the planner first 
finds the top face (i.e., the one with surface normals pointing up) 
and samples the normals for possible grasps by a 2-finger gripper. 
As an illustration, in the figure on the right, the simple 
$2$-finger gripper is placed at a sampled grasp pose over an 
accessible object identified by the planner. Generally, at each 
phase, many such grasping poses are sampled. 

Given the robots' current configurations and the target objects' 
poses, the task for the multi-robot motion planner is to plan 
time-efficient, collision-free trajectories for the robots to retrieve 
the objects from the workspace. Unlike \crp 
where only a single robot appears in the scene, in \mrcr, we must 
avoid robot-robot collisions while ensuring that the resulting paths 
have short makespan. In this work, we experimented with 
both a dRRT${}^*$ \cite{shome2019drrt} based planner and a planner 
that combines visibility graph (VG)~\cite{lozano1979algorithm} and 
Reciprocal Velocity Obstacles (RVO)~\cite{van2011reciprocal}. 
dRRT* worked for two robots while VG-RVO worked well for two or 
more robots. In the two-robot case, we did not observe significant 
optimality difference between the dRRT* solution and the VG-RVO 
solution. As such, we only provide evaluation results on VG-RVO. 
Other multi-robot motion planners can also be used. 


We note that the pipeline can be 
made resolution complete and asymptotically optimal for solving 
\mrcr, e.g., by using an optimal sampling-based motion planner 
within the task planner. However, the approach is unlikely to 
be scalable. 

\subsection{Special Cases and Problem Extensions}
In the study of the single robot version of \mrcr \cite{TanYu2019ISRR} 
(i.e., $k = 1$), we have explored additional cases including multiple exits, 
static obstacles within the workspace, and different object placements. 
From that study, we have observed that internal static obstacles do not 
adversely affect algorithm running time holding object density unchanged. 
Also, cases with multiple exits and axis-aligned or overlapping objects 
can be handled with proper techniques and are generally simpler to solve. 
In the multi-robot scenario, we admit that these variations and extensions 
may make the motion planning part harder to solve, but they do not add much more 
complexity to the task planner. Therefore, in 
our study of \mrcr, we focus on the most challenging single-exit case without 
considering static internal workspace obstacles, and mainly consider cases 
where objects are randomly placed without overlapping. 


\section{Structure and Hardness of \mrcr}\label{sec:pipeline}
\subsection{Unique Structural Properties of \mrcr}\label{subsec:structure}
For an \mrcr instance with $k$ robots and $n$ objects, in the worst case, 
there are $n!k^n$ possible assignments of robots to objects over the time 
horizon: there are $n!$ possible sequences with which the $n$ objects may 
be picked up over time; for each such sequence of length $n$, 
there are $k^n$ possible ways of assigning the $k$ robots. 

Beside the combinatorial explosion induced by multiple robots, there are 
two more differences that distinguish \mrcr from single robot \crp. First, with a 
single exit of limited width, robots must non-trivially coordinate their 
movement to avoid collisions. Second, there can be more intricate 
dependencies between multiple robots and objects that impact solution 
optimality. Fig.~\ref{fig:rordep} illustrates a simple case that may 
happen between two robots and two objects. Essentially, when there are 
multiple robots, {\em lookahead} (i.e., simulating execution of plan into the
future) is necessary to optimize a plan. Our algorithmic solutions 
(in Section~\ref{sec:algorithm}) are tailored to address these unique 
complications induced by \mrcr. 

\begin{figure}[ht!]
\vspace*{3mm}
    \small
    \centering
    \begin{overpic}[width=3.1in,tics=5]{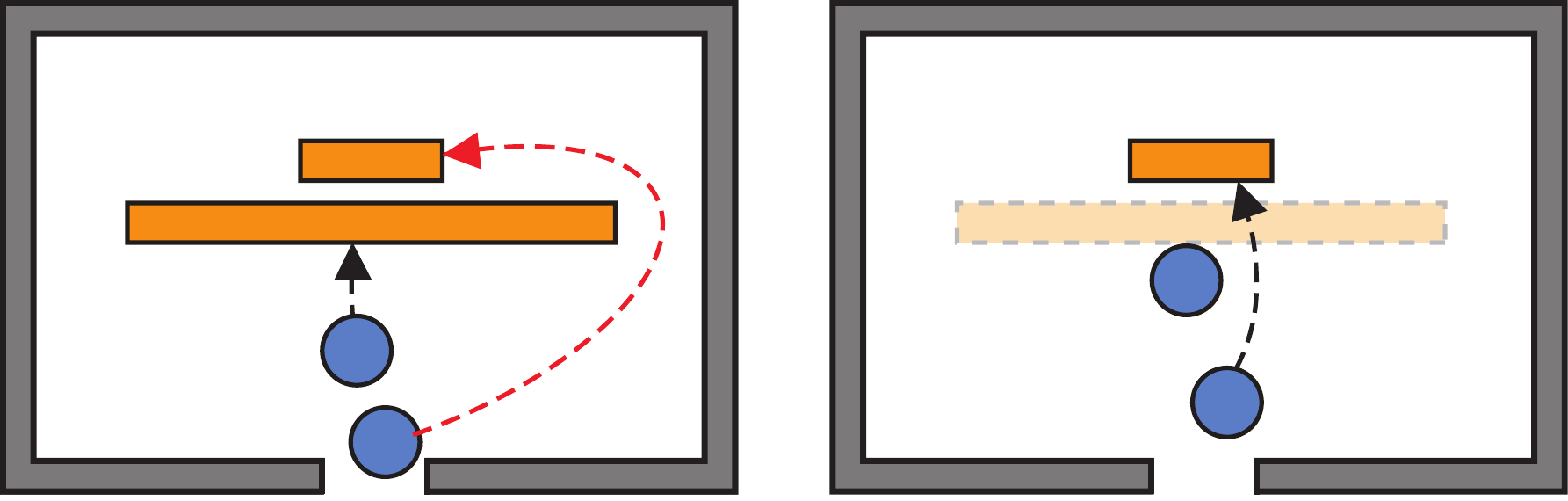}
        \put(7,20){$o_1$}
        \put(60,20){\textcolor{gray}{$o_1$}}
        \put(17,24){$o_2$}
        \put(70,24){$o_2$}
        \put(21.2,8.75){\textcolor{yellow}{$r_1$}}
        \put(22.8,3){\textcolor{yellow}{$r_2$}}
        \put(74,13.2){\textcolor{yellow}{$r_1$}}
        \put(76.5,5.5){\textcolor{yellow}{$r_2$}}
    \end{overpic}
    \caption{\label{fig:rordep} An example illustrating an intricate dependency 
    that may happen among two objects $o_1, o_2$ and two robots $r_1, r_2$. 
    [left] If we do not look into the future, $r_2$ will need to take 
    a long detour to pick up $o_2$. 
    [right] However, it is clear that once $r_1$ lifts $o_1$, 
    $r_2$ may readily access $o_2$.}
\end{figure} 

To conclude this subsection, we mention that for a given environment 
and object set, if a single robot requires at least time $T$ to complete 
the object removal task, then for $k$ robots, the minimum possible makespan 
is no less than $\frac{T}{k}$, i.e., 
\begin{proposition}\label{pro:linearspdup}
For an \mrcr instance with $k$ robots, if the 
corresponding \crp task for a single robot can be optimally solved 
in time $T$, then $k$ robots require at least $\frac{T}{k}$ makespan.  
\end{proposition}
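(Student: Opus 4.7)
The plan is to exhibit, for any feasible $k$-robot schedule of makespan $M$, a feasible single-robot schedule of makespan at most $kM$. Applied to an optimal $k$-robot schedule, this yields $T \le k M^*$ and hence $M^* \ge T/k$, which is exactly the claimed bound.

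First, from a given $k$-robot schedule I would extract the order $o_{\pi(1)}, \ldots, o_{\pi(n)}$ in which the $n$ objects are picked up over time (breaking ties arbitrarily), and construct a single-robot schedule that services the objects in this same order: for each $j$, the lone robot leaves the exit, travels to $o_{\pi(j)}$'s initial pose, grasps it, and returns to the exit to drop it off. Validity follows from the observation that when the single robot arrives at $o_{\pi(j)}$, every object that blocked it in the original workspace has already been cleared: in the $k$-robot schedule all such blockers were removed prior to $o_{\pi(j)}$'s pickup, hence appear strictly earlier in $\pi$, hence have been disposed of by the single robot by then. Robot-robot collisions are vacuous since only one robot is present.

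To bound the makespan, the key structural observation is that the \mrcr rules force every pickup to correspond to a round trip from the exit to the target object and back, because each robot carries at most one object and must cross the exit to drop it off. For the $i$-th robot in the $k$-robot plan, the total time spent on its round trips is at most its active time, which is at most $M$; summing over $i$ gives a total round-trip budget of at most $kM$. The constructed single-robot schedule performs exactly the same $n$ round trips, and with no other robots present each individual trip can only be faster than its counterpart in the $k$-robot plan, so the single-robot makespan is bounded by $kM$.

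The main obstacle is this last step, namely formally verifying that removing the other $k-1$ robots does not lengthen any individual trip. This reduces to noting that the set of in-workspace objects at the $j$-th pickup is identical in both schedules---precisely $\O \setminus \{o_{\pi(1)}, \ldots, o_{\pi(j-1)}\}$---and that eliminating robot-robot interactions can only shrink the shortest collision-free path between two fixed endpoints, both of which follow from the otherwise-static workspace assumption.
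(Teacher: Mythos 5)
Your argument is correct. The paper itself offers no proof of this proposition---it is stated as a self-evident remark (``we mention that\dots'') and immediately followed by the observation that the bound is generally unattainable---so there is no paper proof to compare against line by line. Your serialization argument is the standard and sound way to formalize the claim: take an optimal $k$-robot schedule of makespan $M^*$, replay its pickup order with one robot, observe that (i) every pickup in \mrcr is a round trip anchored at the exit, (ii) the obstacle set the lone robot faces on its $j$-th trip, namely $\O \setminus \{o_{\pi(1)},\dots,o_{\pi(j-1)}\}$ minus the other robots, is contained in what the corresponding robot faced in the $k$-robot plan, so each trip is feasible and no longer, and (iii) the $n$ round trips partition into $k$ per-robot workloads each of duration at most $M^*$, giving a single-robot makespan of at most $kM^*$ and hence $M^* \ge T/k$. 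The one point worth stating explicitly (which you do address in your closing paragraph) is that the objects already \emph{picked up but still in transit} in the $k$-robot schedule are genuinely absent in the serialized schedule, so the subset relation on obstacles holds at departure time and not merely at pickup time. What your write-up buys over the paper's bare assertion is precisely this feasibility check: the inequality is not just a work-conservation count but requires that the serialized order remain executable, which your blocker argument supplies.
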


Clearly, $\frac{T}{k}$ is a theoretical limit that is 
generally not achievable due to interactions among robots and objects. 
We will be comparing our algorithmic solutions to this limit. 

\subsection{Hardness of Selecting Optimal Object Removal Sequence}
In an extended version \cite{TanYu2019ISRREXT} of \cite{TanYu2019ISRR}, 
\crp is shown to be NP-hard to optimally solve when there is a single 
exit (the conference version \cite{TanYu2019ISRR} only provided hardness 
proof for multiple exits). In the proof, it was shown that a planar 
arrangement of objects (Fig.~\ref{fig:crpred}) can be made for which 
the optimal picking sequence is NP-hard to compute. The structure is 
constructed from a {\em monotone planar 3-SAT} (\mpsat) instance 
\cite{de2010optimal}. Based on this result, we may establish 
that the combinatorial part of \mrcr is also hard to solve. 
\begin{figure}[ht!]
\vspace*{1mm}
\begin{center}
\begin{overpic}[width=1.5in,tics=5]{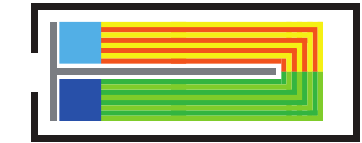}
\end{overpic}
\end{center}
\vspace*{-1mm}
\caption{\label{fig:crpred} A sketched illustration of the object arrangement
used in proving that \crp is NP-hard to solve optimally.}
\end{figure} 

\begin{theorem}
Deciding an optimal object removal sequence is NP-hard for \mrcr. 
\end{theorem}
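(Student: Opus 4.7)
The plan is a polynomial-time reduction from the single-robot \crp, which \cite{TanYu2019ISRREXT} establishes as NP-hard via a reduction from \mpsat. The goal is to embed any hard \crp instance $I$ inside an \mrcr instance $I'$ in such a way that the $k-1$ additional robots cannot meaningfully accelerate the clearing task, so that any optimal \mrcr solution reveals an optimal \crp picking sequence and hence a satisfying assignment to the underlying \mpsat formula.

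First, I would reuse the planar object arrangement of Fig.~\ref{fig:crpred} for the core of $I'$, preserving its property that the optimal object-removal sequence encodes a satisfying assignment. To neutralize the extra robots, I would enlarge the workspace $\W$ by attaching $k-1$ small ``side chambers'' adjacent to the exit. Each side chamber contains exactly one trivial object placed just inside its mouth and is reachable only via its own narrow corridor directly off the drop-off region outside $\W$, so that a robot entering a side chamber is geometrically isolated from both the \crp core and the other chambers. Distances would be calibrated so that servicing any one side chamber is much faster than the time a single robot needs to clear the \crp core in $I$.

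Second, I would shape the neck connecting the exit to the \crp core so that it accommodates only one robot's active working zone at a time. Under this geometry, any plan routing two or more robots into the core incurs strict congestion delays that exceed the savings from parallelism, whereas dedicating one robot to the core and dispatching each of the remaining $k-1$ robots to a distinct side chamber lets the $k-1$ auxiliary tasks finish long before the core is cleared. Consequently, the optimal makespan of $I'$ equals the optimal makespan of $I$, and extracting the picking sequence used inside the core recovers an optimal \crp solution. Since the construction has size polynomial in the size of $I$ and in $k$, NP-hardness transfers to \mrcr.

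The main obstacle will be rigorously excluding subtle multi-robot strategies that interleave robots inside the core by handing off progress between them, thereby avoiding the single-file bottleneck argument above. To close this gap, I would exploit the funnel-like dependency structure of Fig.~\ref{fig:crpred}: since all removed objects must exit through a single shared corridor, any handoff forces one robot to idle while the other carries an object out, so the cumulative time to clear the core under any multi-robot schedule is lower-bounded by the single-robot optimum. With this lemma in place, the reduction is complete.
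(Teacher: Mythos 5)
Your overall strategy---reuse the hard \crp gadget and arrange the geometry so that the extra robots cannot help---is the same idea the paper uses, but your construction is genuinely different and, as it stands, has a gap at exactly the point you flag as the main obstacle. The paper does not keep the extra robots busy with decoy chambers; it instead places $k$ disjoint copies of the \crp gadget around a single shared exit (Fig.~\ref{fig:mrcrred}). That choice is what makes the lower bound cheap: the single-robot optimum of the $k$-copy instance is $kT$, so Proposition~\ref{pro:linearspdup} (a work-conservation/averaging argument) immediately gives a $k$-robot makespan lower bound of $T$, and $T$ is achieved by dedicating one robot to each copy. No claim about what several robots can or cannot accomplish \emph{inside one copy} is needed to pin down the optimal makespan. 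In your construction the total work is only $T$ plus $k-1$ trivial side tasks, so the averaging bound degrades to roughly $T/k$, and you are forced to prove the much stronger statement that no multi-robot schedule clears the core faster than a single robot does.

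That stronger statement is where the gap lies. Your closing lemma argues that the single shared corridor forces one robot to idle during any handoff, so the \emph{cumulative} busy time spent on the core is at least the single-robot optimum. But a bound on cumulative busy time does not bound the \emph{makespan} unless you also show the robots can never make progress concurrently. Even with a strictly single-file corridor acting as a mutex, two robots can pipeline: while robot $1$ is outside dropping off an object, robot $2$ can already be traversing the corridor toward the next object, overlapping turnaround and drop-off time that a lone robot must pay serially. Hence the optimal multi-robot makespan of the core can be strictly below $T$, and not obviously by a cleanly computable offset, which breaks the claimed equality between the optimal makespans of $I'$ and $I$ on which your reduction rests. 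Repairing your version would require either exactly accounting for all such overlaps or adding structure that forbids them; the paper's $k$-copies construction is precisely a way of never having to answer that question.
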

\begin{proof}
The reduction from \mpsat to optimal \crp is fairly complex 
\cite{TanYu2019ISRREXT}. However, for this proof, we only need the fact 
that the colored pieces can be only removed in a mostly sequential manner.
Moreover, an optimal removal sequence translates to a solution to the 
original \mpsat instance. For our reduction from \mpsat to \mrcr with 
$k$ robots, we essentially make $k$ copies of the structure. For two 
robots, a possible reduced instance is given in Fig.~\ref{fig:mrcrred}. 
\begin{figure}[ht!]
\vspace*{1mm}
\begin{center}
\begin{overpic}[width=3.2in,tics=5]{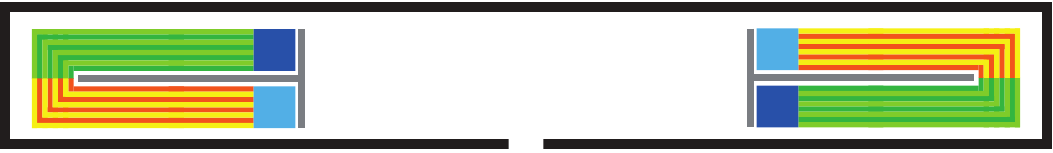}
\end{overpic}
\end{center}
\vspace*{-1mm}
\caption{\label{fig:mrcrred} A sketched illustration of the object arrangement
used in proving that \mrcr is NP-hard to solve optimally.}
\end{figure} 

In the figure, two of the reduced structures are placed on the left and right 
ends of the environment and there is a single exit in the middle. If this setup is 
to be solved with a single robot, it can be partitioned into a left task 
and a right task, each of which take the same amount of minimum time to 
complete, say $T$. The minimum makespan for the full problem is then $2T$ which is 
NP-hard to compute by \cite{TanYu2019ISRREXT}. If we use two robots, then 
the problem can be solved with a makespan of $T$ by asking each robot to take 
care of one side. On the other hand, given the sequential nature in 
solving the individual side, having two robots working on the same 
side (e.g., the left one) cannot reduce the required makespan. As 
such, the \mrcr problem is NP-hard to solve as well. The generalization to 
arbitrary $k$ is straightforward. 
\end{proof}

\section{Computing High-Quality Object Removal Sequence and Robot Assignments}\label{sec:algorithm}
As mentioned, unlike the single robot setting~\cite{TanYu2019ISRR}, 
\mrcr has a unique, more complex structure that makes it much more 
challenging computationally. To begin to address the challenge, we 
first provide a description of the system state for tracking progress 
in a search algorithm. 

\textbf{Problem State Discretization.} 
While there are many possible ways of doing 
this, we discretize the continuous problem at time instances when 
there is at least one robot at the exit (the drop-off location) and 
ready to start retrieving an object. 
With this definition, a state is essentially a crucial time point 
when a decision must be made in terms of which objects the free 
robots are going to retrieve next.
We denote a state as $\{R, 
O_1, O_2, \dots, O_k, O_{\text{NA}}, \mathcal S\}$ with $R$ being 
the set of robots currently idle, $O_j$ ($1 \leq j \leq k$) 
being the sequence of objects that are already assigned to 
robot $r_j$ and are (or being) retrieved by $r_j$. 
$O_{\text{NA}} \coloneqq \O \setminus \bigcup_{1 \leq j \leq k} 
O_j$ is the set of objects not yet assigned to any robot. 
$\mathcal S$ contains the low-level motion planning elements. 

\textbf{A Basic Search Process.}
Beginning from the start state where $O_{\text{NA}} = \O$, 
$R = \R$ and all robots are at the entrance, a discrete 
search can be carried out to find a picking sequence. To proceed 
to a next state, we assign each available robot in $R$ an 
object from $O_{\text{NA}}$. Here, the maximum number of possible assignments 
is $|R|! \binom{|O_{\text{NA}}|}{|R|}$. 
For each possible assignment, a low-level grasp and motion planning 
problem is tackled, which involves two steps: 
{\em (i)} for each object assigned to retrieve, the grasp planner is called 
to provide a feasible configuration for the designated robot to grasp 
the object; 
{\em (ii)} the multi-robot path planner is called to plan the paths for the 
robots to move their respective grasping configuration, grasp the objects, 
and then move back to the entrance to drop-off. 
At the end of the low-level search, a robot drops off an object and 
is ready to start to retrieve another object. Now, the search process reaches a new 
state by our definition.  
This searching process reaches a goal state 
when all the objects are retrieved from the workspace. 

An illustration of a search tree section is provided in Fig.~\ref{fig:search}. 
At the top left state, robot $r_2$ is on its way retrieving object $o_2$, 
while robot $r_1$ just finished dropping off an object and is ready to start to 
retrieve the next one. Since $O_{\text{NA}} = \{o_1, o_3\}$ and both objects 
are reachable, the current state generates two new states by 
assigning $o_1$ or $o_3$ to $r_1$. Since $o_3$ is close to the exit, 
if we assign $o_3$ to $r_1$, at the next state (the one in the middle 
of Fig.~\ref{fig:search}), $r_1$ finishes retrieving $o_3$ before $r_2$ reaches the exit, 
and immediately starts to retrieve $o_1$. The search finishes as 
both robots return to the exit and all objects are removed from the workspace. 

\begin{figure}[ht]
    \centering
    \small
    \begin{overpic}[width = 0.99 \linewidth]{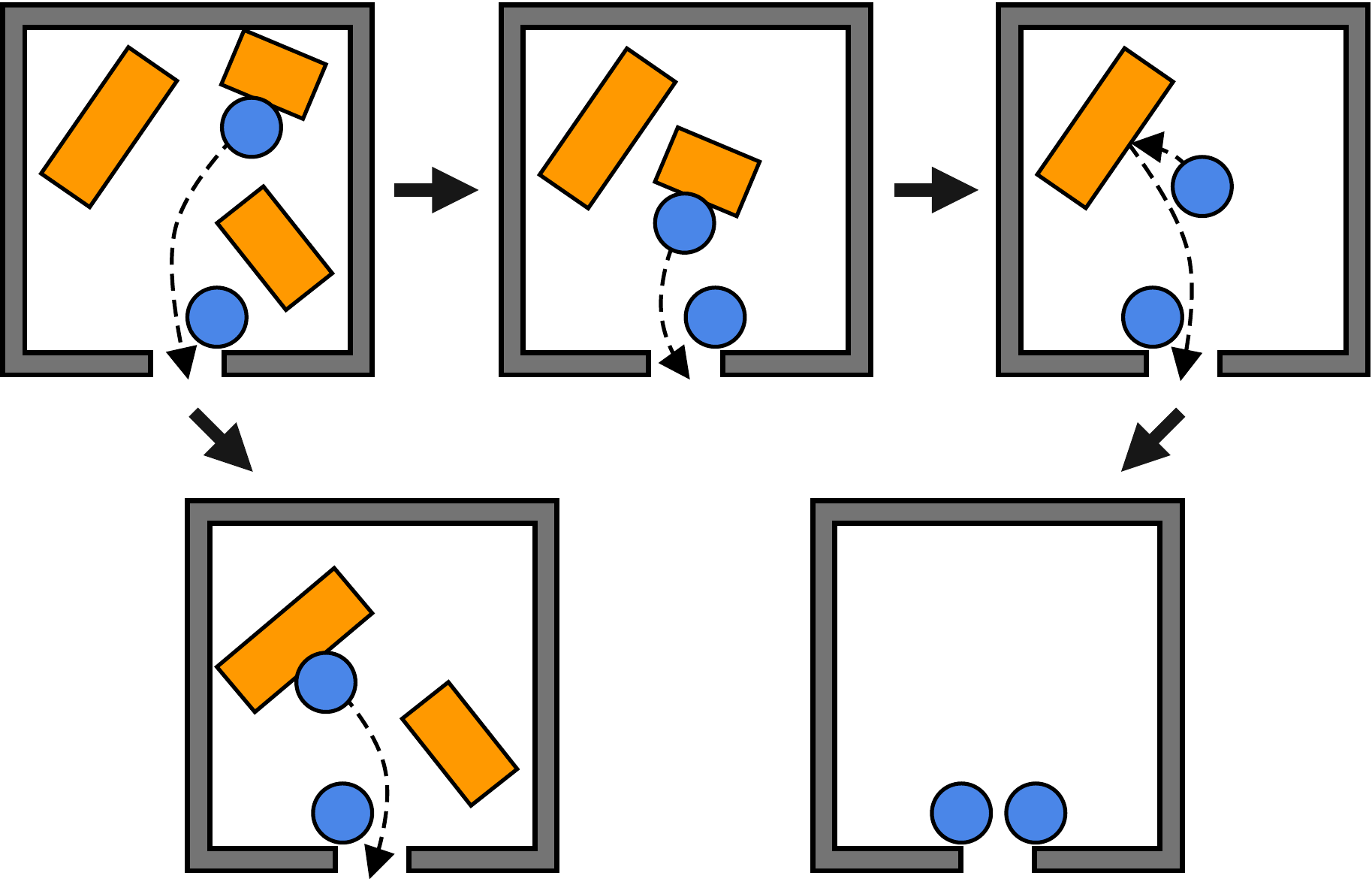}
        \put (14.3, 40.5) {\textcolor{yellow}{$r_1$}}
        \put (16.7, 54.4) {\textcolor{yellow}{$r_2$}}
        \put (6.3, 54.5) {$o_1$}
        \put (18.3, 58.2) {$o_2$}
        \put (18.5, 45.7) {$o_3$}
        \put (50.7, 40.5) {\textcolor{yellow}{$r_1$}}
        \put (48.3, 47.5) {\textcolor{yellow}{$r_2$}}
        \put (42.8, 54.5) {$o_1$}
        \put (50, 51) {$o_2$}
        \put (86.2, 50) {\textcolor{yellow}{$r_1$}}
        \put (82.4, 40.4) {\textcolor{yellow}{$r_2$}}
        \put (79, 54.5) {$o_1$}
        \put (22.3, 14) {\textcolor{yellow}{$r_1$}}
        \put (23.4, 4.3) {\textcolor{yellow}{$r_2$}}
        \put (19.5, 17) {$o_1$}
        \put (32, 9.5) {$o_3$}
        \put (74, 4.3) {\textcolor{yellow}{$r_1$}}
        \put (68.5, 4.3) {\textcolor{yellow}{$r_2$}}
    \end{overpic}
    \caption{\label{fig:search}
    Illustration of a discrete search tree section. 
    There are totally five discrete search states in this figure. 
    }
\end{figure}

When performing an object assignment during the search process, 
not all assignments seem to provide feasible low-level grasp and 
motion planning problems due to two reasons. 
First, some objects might be surrounded by the others, thus the 
grasp planner cannot find a collision-free grasping configuration. 
Second, even with a valid grasping configuration, an object might 
be behind some other ones and is not reachable. That is, there does not exist 
a collision-free path from the robot's current configuration to the 
designated object. 
As discussed in Section~\ref{subsec:structure}, simply ignoring these 
seemingly infeasible assignments could 
significantly increase the solution makespan. 
In this work, we use a {\em lookahead} method to accurately determine  
whether an assignment is feasible.

\textbf{With Lookahead.} 
To address the optimality-impacting interactions among multiple robots
and objects, at each state, lookahead is performed via simulating
the execution of robot actions for robots that are already assigned.
For example, for the case from Fig.~\ref{fig:rordep}, object $o_1$ is assigned to robot $r_1$. 
As the assignment to $r_2$ is being 
decided, we also consider a future setting where $o_1$ is already picked
up by $r_1$ and thus no longer blocking $o_2$. 
Similar procedure can be carried out for $k$ robots in a recursive manner: 
as an assignment is being decided, we virtually remove the objects 
that are immediately reachable, and then see if such a removal makes the 
previously unreachable objects reachable. 
As we will show in Section~\ref{sec:evaluation}, algorithms with  
lookahead always provide more optimal solutions than the ones 
without lookahead. 

Based on the state discretization and the search structure, we propose 
several algorithms to solve the problem near-optimally, which are 
introduced in the next few sub-sections. 
Note that we describe the algorithms as near-optimal due to the near-optimality 
of the motion planner. 

\subsection{Near-Optimal A${}^*$ Search with an Admissible Heuristic}
With the definition of the system state and the established tree structure, 
the A${}^*$ framework can be applied to find a 
near-optimal solution. For estimating the cost-to-goal, we propose 
an admissible heuristic which calculates an underestimated makespan, 
i.e., the minimum distance that the robots must travel to retrieve all 
remaining objects over the product of the number of robots $k$ and 
the maximum speed of a robot. 
Given a search state $s$, 
we denote $o_j$ as the object robot $r_j$ is currently retrieving. 
The heuristic value for $s$ is calculated as 
\[
H(s) = \frac{\sum_{1 \leq j \leq k} d(o_j) + \sum_{o_i \in O_{\text{NA}}} d(o_i)}{ kv_{\text{max}}}, 
\]
where $v_{\text{max}}$ denotes the maximum speed of the robots, 
and the function $d$ returns the shortest distance for a robot to 
retrieve an object. For the former part of the numerator, $d(o_j)$ 
calculates the remaining straight line distance for robot $r_j$ to finish retrieving 
$o_j$; for the latter part, $d(o_i)$ is two times the Euclidean 
distance between $o_i$ and the drop-off location. 
Note that since the heuristic ignores robot interaction and acceleration, 
its value never exceeds the true cost-to-goal. 
In a real system with robot's specifications available, the heuristic may 
use more accurate metrics rather than distance over maximum speed, 
to account for robot dynamics. 

As shown in Section~\ref{sec:evaluation}, the A* algorithm 
generates high-quality solutions for all test cases. However, this 
method only scales up to around eleven robots. To improve the 
scalability while maintain a high level of solution optimality, 
we propose an approximate dynamic programming (DP) algorithm. 

\subsection{Approximate Dynamic Programming} 
The dynamic programming (DP) recursion is based on the assumption that 
the near-optimal solution of the entire problem can be built on 
top of the near-optimality of its sub-problems. A sub-problem of \mrcr 
is to simply retrieve a subset of objects $O_{\text{sub}} \subseteq \O$, 
while ignoring the other objects for both task completion and collision check. 
Let $C(O_{\text{sub}})$ denote the near-optimal makespan of clearing 
all objects in $O_{\text{sub}}$, the DP framework for calculating a 
near-optimal makespan for the entire problem is demonstrated in Alg.~\ref{alg:dp}. 

\begin{algorithm}
    \DontPrintSemicolon
    $C = \{\varnothing : 0\}$\;
    \For{$1 \leq m \leq n$}{ \label{alg:dp:iteration}
        \For{$O_{\text{sub}} \in$ {\normalfont all $m$-subsets of $\O$}}{\label{alg:dp:combinations}
            $C(O_{\text{sub}}) = \displaystyle \min_{o_i \in O_{\text{sub}}}\{\min_{1 \leq j \leq k}\{C(O_{\text{sub}} \setminus \{o_i\}) + c_{ij}\}\}$\;\label{alg:dp:recursion}
        }
    }
    \Return $C(\O)$\;
    \caption{Using DP to get a near-optimal makespan.}\label{alg:dp}
\end{algorithm}

We start from the base case where no objects need to be picked up and 
gradually increase the number of objects $m$ in $O_{\text{sub}}$ (line~\ref{alg:dp:iteration}). 
For each $1 \leq m \leq n$, we iterate through all possible $O_{\text{sub}}$ 
(line~\ref{alg:dp:combinations}) and use the recursive function in 
line~\ref{alg:dp:recursion} to calculate a near-optimal makespan to 
remove all objects in this object subset. 
In the recursive function, $c_{ij}$ is the additional cost of using robot 
$r_j$ to remove object $o_i$ as compared to the makespan of 
removing all objects in $O_{\text{sub}} \setminus \{o_i\}$. Here, 
$C(O_{\text{sub}} \setminus \{o_i\})$ is already calculated in the $(m - 1)$-th 
iteration, and $c_{ij}$ can be computed by calling the multi-robot motion planner. 
It is straightforward that with proper bookkeeping, such a DP 
structure also provides the near-optimal task and motion plan associated with the 
near-optimal makespan.

The time complexity of the DP approach is as follows. 
Suppose $|O_{\text{sub}}| = m$, there are $\binom{n}{m}$ possible 
$O_{\text{sub}}$ and for each, computing the recursion requires a cost 
of $O(kn)$. This yields a total computational cost of  
\[
O(kn) \big[\binom{n}{0} + \binom{n}{1} + \dots + \binom{n}{n-1}\big]= O(kn2^n),
\]
which grows much slower than the naive $O(n!k^n)$. 

\subsection{Greedy Best-First Search}
To further improve the scalability of the discrete search-based method, 
we have developed a greedy best-first search algorithm to reduce the number of 
state visited in the entire search process. 
During the object assignment stage, for each robot, the greedy algorithm always 
selects the closest available object as its next target object. 

\subsection{Monte Carlo Tree Search} 
Trying to find a balance between scalability and optimality, 
we also implemented a search algorithm based on {\em Monte Carlo tree search} (MCTS) 
\cite{coulom2006efficient, kocsis2006bandit}. 
Here in this sub-section, we first provide a brief review of the MCTS framework 
and then focus on explaining how we adapt it for \mrcr. 

MCTS is a search algorithm that expands the search tree based on the analysis of 
the most promising states. The method's essential components are 
{\em selection}, {\em expansion}, {\em simulation}, and {\em back-propagation}. 
In a basic iteration of MCTS, first, the selection stage picks a search node based 
on a selection function. Then, the expansion stage creates child nodes of 
the selected node. After that, the {\em reward values} of the new child nodes are determined 
by a simulation from the node to an end state. Finally, the back-propagation stage 
updates the tree to prepare for the next selection stage. 

In our implementation, the selection phase uses an Upper Confidence Bound (UCB) 
formula $\bar{T} + \sqrt{\frac{\ln{N}}{n}}$ to select the next node to explore. 
Here, $\bar{T}$ is the average simulated child node makespan, $n$ is the number of times the 
node is expanded, and $N$ is the number of parent expansions. 
As a node is expanded, we visit all child nodes and estimate their makespan 
as follows: 
for high-level decision making, we use the greedy best-first strategy; 
for low-level execution time prediction, instead of calling the motion planner, 
we simply use the picking distance over robot speed, and ignore interactions between robots.
The reward of each simulation is back-propagated to inform further expansion.
Every time after we expanded three levels of the search tree, the most explored node is chosen as the next action. 


\section{Experimental Studies}\label{sec:evaluation}
We evaluate the performance of the proposed near-optimal and heuristic 
search algorithms under varied object density and drop-off location setups. 
The algorithms were implemented in C++ and executed on a quad-core Intel 
CPU at 3.3GHz with 32GB RAM. A video of simulated Kuka youBots carrying 
out the tasks is provided that supplements the evaluation described in 
this section. 

The testing environments we used for evaluation can be categorized 
into two general types: one with objects sampled close to each other 
in a {\em clutter}, and the other one with objects {\em scattered} inside the 
workspace. An illustration of these two typical environments are 
shown in Fig.~\ref{fig:test_env}. For evaluation, we use disc robots 
with identical radius; some randomly selected solutions are then 
executed in the V-REP simulator~\cite{RohSin2013IROS} with an accurate
Kuka youBot model, and the execution results are consistent with 
the evaluation. We assume that the picking and the unloading time 
are the same and equal the time it takes for a robot to travel 
half of the square workspace side length at maximum speed. This 
adjustable picking/unloading time is included in the total execution 
time; our choice is based on our actual experience working with 
Kuka youBots.  

For multi-robot motion planning, in the evaluation, a shortest path 
connecting a robot to the assigned object is computed using visibility 
graph~\cite{lozano1979algorithm} which treats objects as static 
obstacles but does not consider interactions among the robots. Then, 
to generate feasible robot trajectory, collisions among robots are 
resolved using RVO~\cite{van2011reciprocal}.   
\begin{figure}[ht!]
    \centering
    \includegraphics[width = 0.88\linewidth]{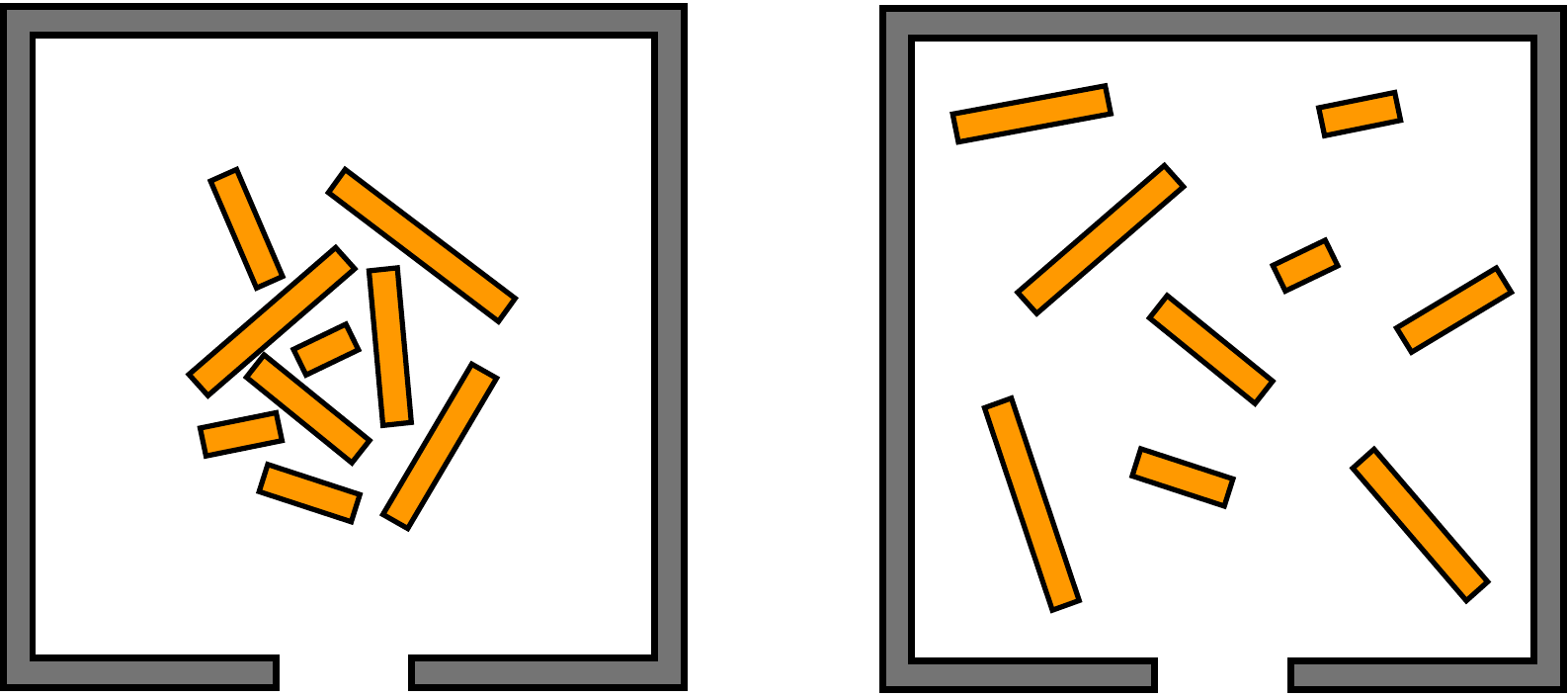}
    \caption{\label{fig:test_env}
    Two different categories of test environments. [left] A cluttered 
		setting. [right] Objects are more dispersed in the environment.  
    }
\end{figure}

Our first set of experiments evaluates the performance of all 
proposed algorithms using both cluttered and scattered setups 
(Fig.~\ref{fig:test_env}), with the number of robots $k = 2$ 
and the number of objects $n$ ranging from $5$ to $15$. 
Fig.~\ref{fig:two-robots} shows the result on optimality ratio and 
computation time. The optimality ratio is the solution makespan of 
the proposed algorithms divided by the optimal makespan of {\em single} 
robot clutter removal for the same setup. The results show that for 
all test cases, the near-optimal (A${}^*$ and DP based) algorithms 
achieve an optimality ratio as low as $0.6$, which is close to 
the underestimated theoretical lower bound $0.5$. As the number of objects 
increases, the difference in optimality ratio between near-optimal 
algorithms and heuristic search algorithms can become as large 
as $5\%$. In terms of scalability, the A${}^*$ search method 
scales up to about $11$ objects with the computation time limited 
under $400$ seconds, while DP can handle $15$ objects. The heuristic 
search methods can solve a problem with $15$ objects in under $30$ 
seconds. 

\begin{figure}[ht!]
    \begin{overpic}[width=0.99\columnwidth, grid = 5]{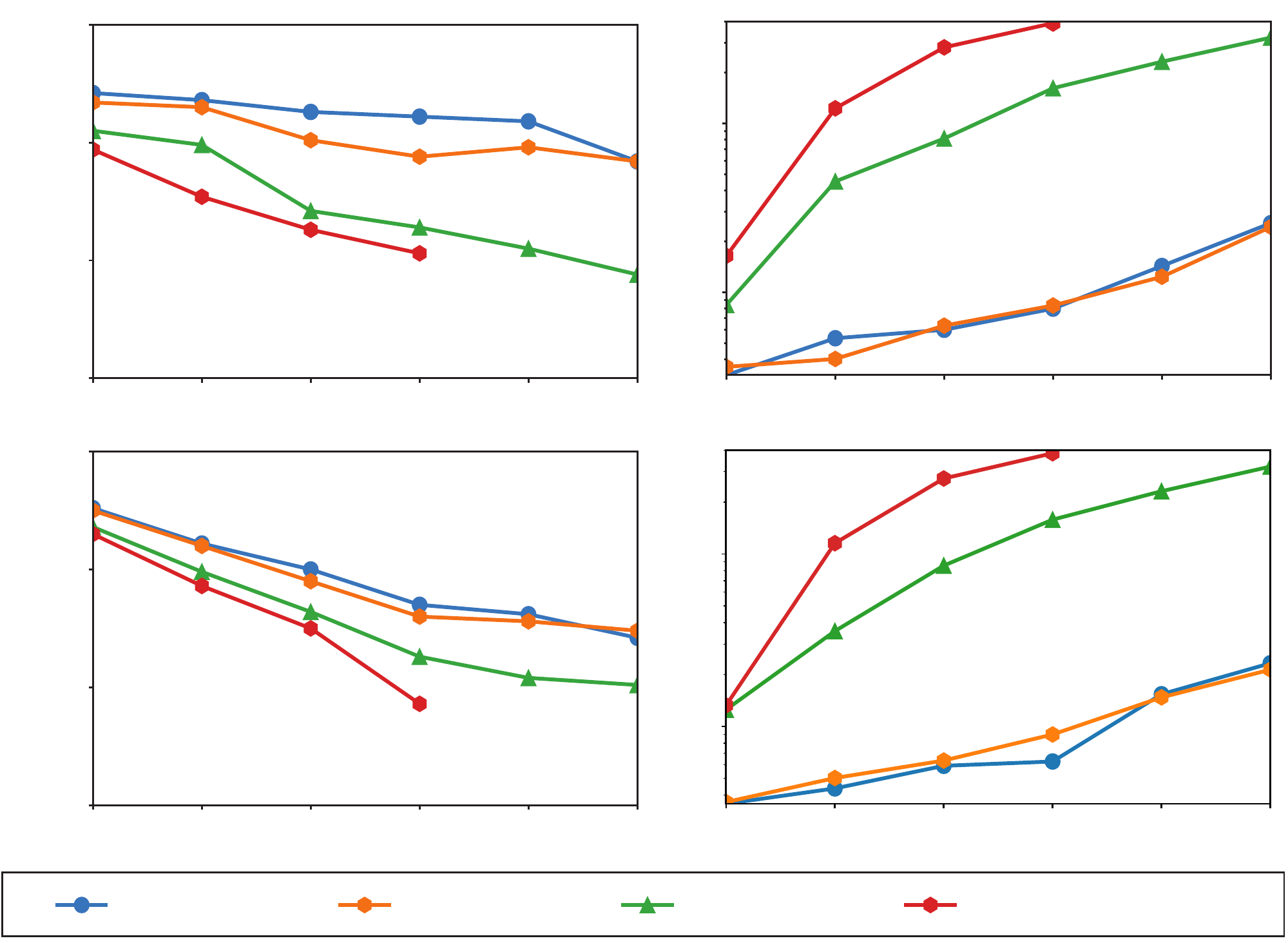}
        \scriptsize
        \put(0, 70) {$0.70$}
        \put(0, 61.5) {$0.65$}
        \put(0, 52.8) {$0.60$}
        \put(0, 44) {$0.55$}
        \put(0, 36.5) {$0.70$}
        \put(0, 27.9) {$0.65$}
        \put(0, 19) {$0.60$}
        \put(0, 10.2) {$0.55$}
        \put(51, 16) {$10^1$}
        \put(51, 29) {$10^2$}
        \put(51, 49.5) {$10^1$}
        \put(51, 62.5) {$10^2$}
        \put(6.4,  39.8) {$5$}
        \put(14.9, 39.8) {$7$}
        \put(23.3, 39.8) {$9$}
        \put(31  , 39.8) {$11$}
        \put(39.3, 39.8) {$13$}
        \put(47.8, 39.8) {$15$}
        \put(55.6, 39.8) {$5$}
        \put(64  , 39.8) {$7$}
        \put(72.5, 39.8) {$9$}
        \put(80.2, 39.8) {$11$}
        \put(88.4, 39.8) {$13$}
        \put(96.9, 39.8) {$15$}
        \put(6.4,  7) {$5$}
        \put(14.9, 7) {$7$}
        \put(23.3, 7) {$9$}
        \put(31  , 7) {$11$}
        \put(39.3, 7) {$13$}
        \put(47.8, 7) {$15$}
        \put(55.6, 7) {$5$}
        \put(64  , 7) {$7$}
        \put(72.5, 7) {$9$}
        \put(80.2, 7) {$11$}
        \put(88.4, 7) {$13$}
        \put(96.9, 7) {$15$}
        \put(10,  1.9) {Greedy}
        \put(32,  1.9) {MCTS}
        \put(54,  1.9) {DP}
        \put(76,  1.9) {A${}^*$}
	\end{overpic}
    \caption{\label{fig:two-robots}
    Optimality ratio and computation time comparison of all the removal 
		sequence search methods for \mrcr with two robots. The $x$ axis on all 
		figures are the number of objects in a given environment. The $y$ axes 
		for the figures on the left refer to optimality ratio as compared with
		the single-robot case. The $y$ axes for the figures on the right are 
		the computation time in seconds. [top] Settings where objects are more 
		cluttered. [bottom] Settings where objects are more scattered. 
    }
		\vspace*{-2mm}
\end{figure}
With a closer look, we can observe that, when the cluttered case is 
compared with the scattered case, A${}^*$ and DP generally does 
slightly better in the former, except at one or two outlier points. 
On the other hand, the greedier methods show an opposite trend and 
do better in the scattered case. 
By examining the object picking sequences generated in these settings, 
it appears that the difference can be explained as follows. From the 
earlier study of \crp \cite{TanYu2019ISRR}, we know that greedy methods 
work well for a single robot, producing solutions generally 
indistinguishable from the optimal. As we move to two robots, for the 
scattered case, greedy methods are likely to cause the robots to pick 
objects in different part of the workspace, essentially running a 
single robot greedy solution with some minor coordination. 
On the other hand, since the cluttered case is harder than the scattered one, 
there are generally more opportunities for multiple robots to optimize the picking sequence. 
However, greedy methods will not exploit such optimization as much given its short 
horizon. Due to the closeness of the objects to be removed, the greedy method actually 
spend more time coordinating the robots' motion.

In the second set of experiments, we evaluate the heuristic search 
algorithms (i.e., greedy search and MCTS) in the scattered environment as 
the number of robots ranging from $1$ to $5$ and the number of objects
ranging from $10$ to $25$. Since the number of robots is increased, the 
exit width is also increased so that it potentially allows two robots to unload 
at the same time, to avoid making the exit an artificial bottleneck. The 
experimental result is summarized in Fig.~\ref{fig:two-five}. 
As we can observe, first, as expected, with a larger exit, the two-robot case 
does better than the previous experiment. 
Second, whereas adding more robots shortens the makespan, the amount of 
gain is quickly diminishing (it does so even faster if we use a narrower
exit, which we have attempted). Nevertheless, with five robots, the 
greedy methods can readily handle $25$ objects and does so about $2.5$ times 
faster than if a single robot is used. We note that the setup is chosen to 
be somewhat constrained; if multiple exits are available, additional 
execution time speedup can be expected. 


\begin{figure}[ht!]
		\vspace*{-2mm}
    \includegraphics[width=\linewidth]{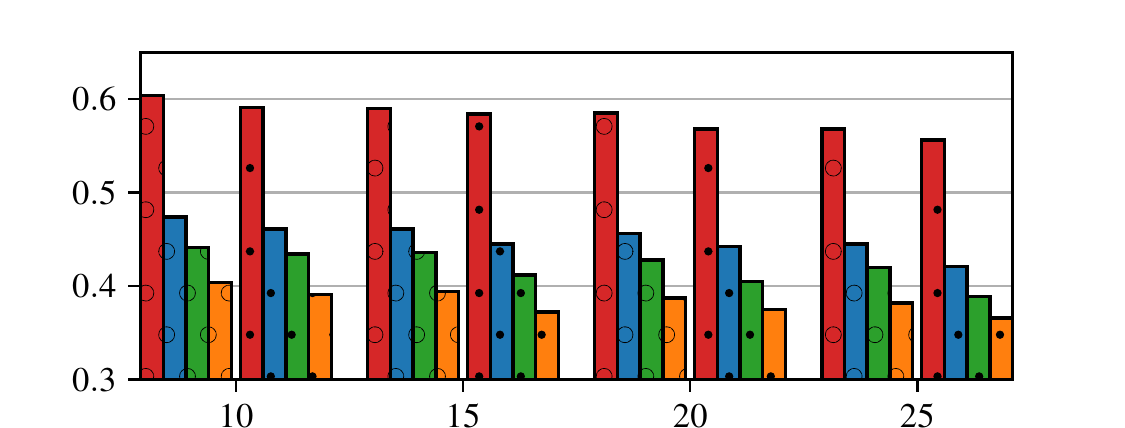}
    \includegraphics[width=\linewidth]{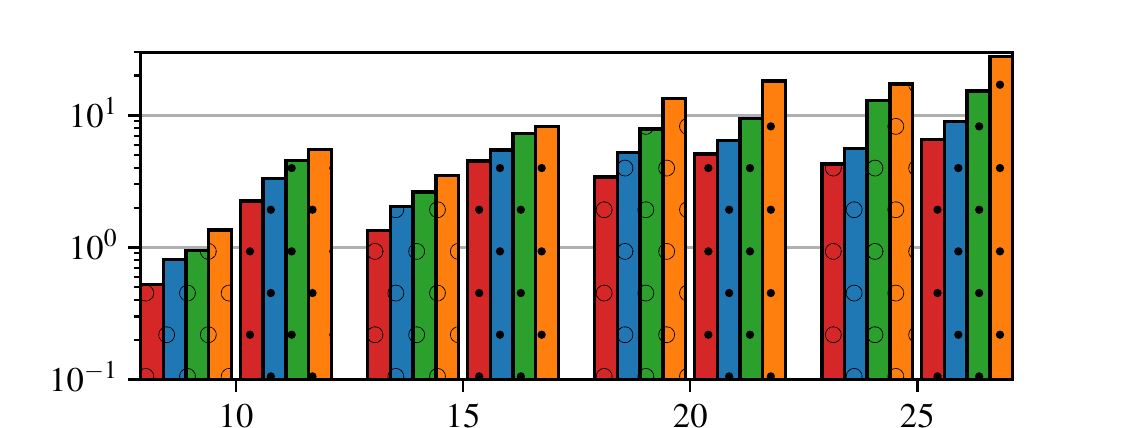}
		\vspace*{-4mm}
    \caption{\label{fig:two-five}
		Performance of greedy and MCTS methods for $2$-$5$ robots and $10$-$25$ objects.
		The $x$ axes for both figures denote the number of objects. The top figure 
		is the makespan as compared with a single robot case. The bottom figure are 
		the computation time in seconds. For each number of objects, there are 
		eight bars. The left (resp., right) four bars correspond to greedy search
		(resp., MCTS search) for $2, 3, 4, 5$ robots, from left to right.
    }
\end{figure}

In a third set of experiments presented here, we work with a cluttered setting
to evaluate the impact of the complex multi-robot multi-object dependency in \mrcr, 
as explained in Section~\ref{subsec:structure}. 
In the experiment, we run each of the four search methods 
with lookahead (Section~\ref{sec:algorithm}) enabled or disabled (by default 
lookahead is enabled). The result is plotted in Fig.~\ref{fig:lh}. The 
solid lines are the same as that from the first plot in 
Fig.~\ref{fig:two-robots}. As expected, the interaction 
contributes positively to optimality, yielding a gain up to about $5\%$. 

\begin{figure}[ht!]
\vspace*{0mm}
    \begin{overpic}[width=0.99\columnwidth]{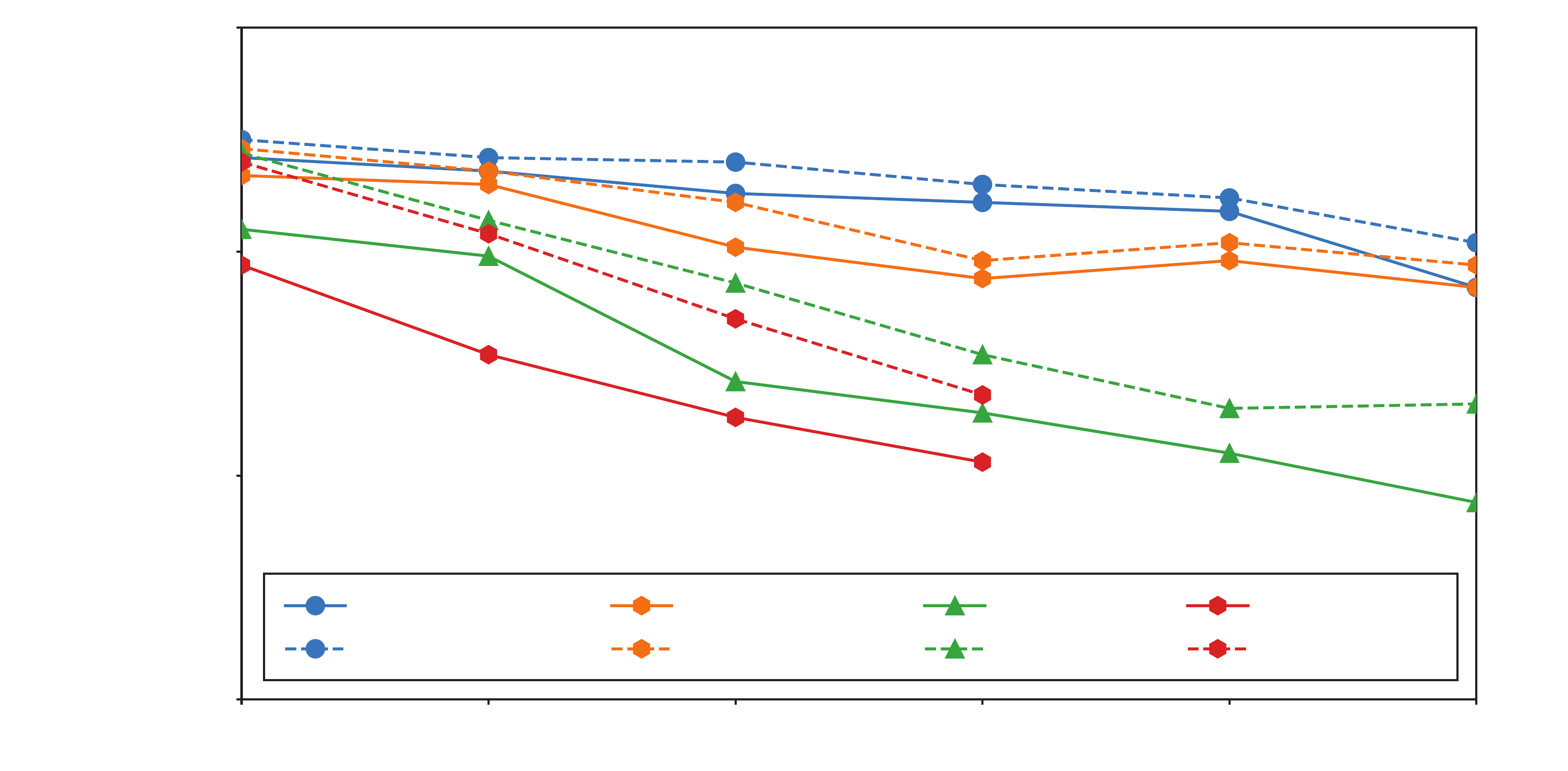}
        \scriptsize
        \put(8.5, 46.5) {$0.70$}
        \put(8.5, 32.2) {$0.65$}
        \put(8.5, 17.9) {$0.60$}
        \put(8.5,  3.5) {$0.55$}
        \put(14.6, 1.2) {$5$}
        \put(30.3, 1.2) {$7$}
        \put(46.1, 1.2) {$9$}
        \put(61,   1.2) {$11$}
        \put(76.7, 1.2) {$13$}
        \put(92.6, 1.2) {$15$}
        \put(22.5, 9.6) {Greedy}
        \put(22.5, 6.9) {Greedy -- LH}
        \put(43.5,   9.6) {MCTS}
        \put(43.5,   6.9) {MCTS -- LH}
        \put(64, 9.6) {DP}
        \put(64, 6.9) {DP -- LH}
        \put(80.5, 9.6) {A${}^*$}
        \put(80.5, 6.9) {A${}^*$ -- LH}
    \end{overpic}
    \caption{\label{fig:lh}
    Comparing methods with and without performing look-ahead to address 
        the multi-robot multi-object dependency.
    The $x$ axis is the number of objects in a given environment. 
    The $y$ axis is the optimality ratio as compared with the single-robot case.
		``method -- LH'' means a given method without look-ahead. 
    }
\end{figure}

Lastly, as mentioned, in V-REP based simulation with accurate Kuka youBot 
models, the computed object removal sequence and the associated robot 
assignments returned from our task planner directly carry over; the amount 
of execution time that is saved is also largely the same as those shown in 
Fig.~\ref{fig:two-robots} and Fig.~\ref{fig:two-five}. This further 
validates the effectiveness of our pipeline design. Selected V-REP simulations 
can be found in the accompanying video.

\section{Conclusion and Discussions}\label{sec:conclusion}
In this work, we have explored the combinatorial challenge present in 
the task and motion planning problem of removing clutter from an 
environment with limited ingress/egress access using multiple robots. 
We call the formulation the \mrcr problem. In contrast to the single 
robot case \cite{TanYu2019ISRR}, for multiple robots, in addition to 
having a much larger search space due to the choice of more robots, 
unique constraints also arise that make (near-) optimal task planning 
more computationally demanding. Toward addressing these combinatorial 
challenges in \mrcr, we have proposed an extendable solution pipeline 
and within it multiple principled search algorithms (greedy, MCTS, DP, 
and A${}^*$) that balance between scalability and solution 
optimality. In general, however, these search methods are all 
reasonably practical and produce significant savings in task execution 
time when the solutions are compared with those from the optimal single 
robot setting. For example, using two robots, the execution time can 
be as little as less than $60\%$ of the optimal single robot case, 
approaching the theoretical lower bound of $50\%$. Moreover, when 
integrated into physics based simulation in realistic settings, the 
performance of our algorithms hold unchanged. Strengthening the 
result, we also show that computing the optimal object removal 
sequence remains NP-hard for multiple robots. 
In conclusion, we have developed several effective search algorithms 
for computing high-quality object removal sequence for solving \mrcr.

\bibliographystyle{formatting/IEEEtran}
\bibliography{bib/all}

\end{document}